\newtheorem{lemma}{Lemma}
\newcommand{\Sctr}{S_{\text{ctr}}}
\newcommand{\Sal}{S_{\text{al}}}
\title{Algorithmic Collective Action \\ with Multiple Collectives}
\author{%
  Claudio Battiloro$^*$\\
  Harvard University \\
  \And
 Pietro Greiner\thanks{Corresponding authors. Emails: \texttt{ cbattiloro@hsph.harvard.edu,pietro.greiner@mila.quebec}} \\
  Mila, LawZero\\
  \And
  Bret Nestor\\
  Harvard University\\
  \And
  Oumaima Amezgar\\
  University of Padova \\
  \And
  Francesca Dominici\\
  Harvard University \\
}
\begin{document}

\maketitle

\begin{abstract}
As learning systems increasingly influence everyday decisions, user-side steering via Algorithmic Collective Action (ACA)—coordinated changes to shared data— offers a complement to regulator-side policy and firm-side model design. Although real-world actions have been traditionally decentralized and fragmented into multiple collectives despite sharing overarching objectives-with each collective differing in size, strategy, and actionable goals, most of the ACA literature focused on single collective settings. In this work, we present the first theoretical framework for ACA with multiple collectives acting on the same system. In particular, we focus on collective action in classification, studying how multiple collectives can plant signals, i.e., bias a classifier to learn an association between an altered version of the features and a chosen, possibly overlapping, set of target classes. We provide quantitative results about the role and the interplay of collectives' sizes and their alignment of goals. Our framework, by also complementing previous empirical results, opens a path for a holistic treatment of ACA with multiple collectives.
\end{abstract}

\section{Introduction}\label{sec:intro}
AI’s rapid spread rides on massive training datasets, yielding stronger predictors, and wider applications. However, as data-driven systems shape and guide more aspects of life they raise acute risks: privacy violations, leakage of
sensitive information, and biased decisions that entrench inequality.

Responses span firms, regulators, and users. At the firm's level, “Trustworthy AI” programs embed fairness checks, bias mitigation, privacy audits, and red teaming across the lifecycle \cite{barocas2023fairness}, but often clash with performance and engagement goals. At the regulators' level, privacy laws—e.g., GDPR \cite{EU2016GDPR}, PIPEDA \cite{Canada2000PIPEDA}, and CPRA \cite{California2020CPRA}—set floors, yet compliance alone rarely ensures socially responsible outcomes \cite{selbst2019fairness,utz2019informed}. Finally, at the user's level, Algorithmic Collective Action (ACA) has recently emerged as an appealing paradigm to empower grassroots efforts by organizing users to coordinate data contributions or refusals to steer models \cite{HardtEtAl2023,devrio2024building}.

This work is motivated by the fact that the ACA literature has almost entirely focused on scenarios where only a single collective acts. However, real-world actions have traditionally been carried out by multiple collectives, resulting in decentralized and fragmented practices. Different collectives often vary widely in size, strategy, and even actionable goals, yet they can still align on common overarching objectives (e.g. climate justice or gender equality). Bruno Latour’s actor-network theory \cite{Latour2005Reassembling} conceptualized why such movements rarely behave as a single unit: what is regarded as a global campaign is really a web of local interactions with no singular center. These loose networks manage to act together only through continual coordination. Indeed, they are inherently dynamic and not intrinsically unified. An interesting example is the cyberfeminist struggle, which has never been one monolithic front; instead, it consists of distributed efforts ranging from feminist hacker-art collectives to global hashtag campaigns. Donna Haraway’s Cyborg Manifesto \cite{Haraway1991CyborgManifesto} famously proposed the cyborg as a metaphor for coalition across difference, emphasizing affinity over rigid identity categories. In that spirit, cyberfeminism has taken many decentralized forms around the world while still sharing core aims of challenging patriarchal tech culture and gender inequality.  Environmental activism provides another strong example: the climate justice movement is a patchwork of groups employing diverse tactics (from youth-led social media strikes to militant direct-action cells) to achieve various actionable goals (from permit denials to pipeline stoppages and enforcement actions), but they all strive toward the same overarching objective of halting climate breakdown. Here, we aim to reflect and model the need for considering multiple collectives in ACA.


\textbf{Related Works.} ACA as users steering ML outputs toward a group goal was recently rigorously formalized in \cite{HardtEtAl2023}, building on the Data Leverage framework’s view of data as an active instrument rather than a passive input \cite{VincentEtAl2020}. In \cite{SiggEtAl2024}, the authors proposed a combinatorial model for ACA to study the strategic interaction between drivers and delivery platforms, inspired by the \#DeclineNow DoorDash campaign. The work in \cite{BenDovEtAl2024} studied how the firm's chosen learning algorithms affect the success of ACA.  Motivated by the growing regulatory focus on privacy and data protection, \cite{solanki2025crowding} studied the effectiveness of ACA if AI firms train differentially private models. The work in \cite{baumann2024algorithmic} focused on ACA in recommender systems. The authors of \cite{gauthier2025statistical} proposed an alternative theoretical framework to \cite{HardtEtAl2023} that empowers collectives via statistical inference, enabling them to learn better ACA strategies and infer the parameters that determine their success. The closest work to ours is \cite{karan2025algorithmic}, in which a conceptual framework for ACA with multiple collectives is introduced, and experiments are performed. However, no theoretical treatment is provided, and most of the focus is on ACA with two collectives. Our work is inspired by the empirical findings of \cite{karan2025algorithmic} and, partly, by its conceptual framing. We move further by designing the first rigorous treatment for ACA with multiple collectives, generalizing the original theoretical framework from \cite{HardtEtAl2023}.

\textbf{Contribution.} We introduce the first theoretical framework for ACA with multiple collectives. We focus on collective action in classification, studying how $M$ collectives can plant signals, i.e., bias a classifier to learn an association between an altered version of the features $x$ and a chosen, possibly overlapping, set of target classes $\{y_c\}_{c=1}^{S\leq M}$. We analyze two regimes: when collectives can act on both features and labels, and when collectives are limited to acting on features only. For each regime, we derive lower bounds on the per-collective and global success, revealing interesting trade-offs driven by the interplay of the sizes of the collectives and by how closely their actionable goals align. We illustrate the framework with a use case in climate adaptation.  Finally, we discuss what future methodological and empirical directions are needed to enrich our approach and make it a holistic framework for ACA with multiple collectives.

\section{Algorithmic Collective Action with Multiple Collectives}\label{sec:MACA}
We start from the definition of ACA with a single collective as given in \cite{HardtEtAl2023,gauthier2025statistical}. A firm deploys a learning algorithm on data coming from a certain user population. Within the overall user population, a collective forms that represents a given share of users (a positive fraction, but not the whole). This collective wants to steer the platform’s outcome toward a certain objective. To do so, the collective agrees on one strategy for editing their contributions—potentially changing features, labels, or both—before those contributions reach the platform. The platform then receives a training set that blends unmodified data from the baseline process with the collective’s edited data, and it deploys its algorithm on this blended data. The collective’s impact is evaluated by a tailored success measure. Given its size, the collective’s problem is to choose the editing rule that maximizes this success once the platform has trained on the blended data. Here, we rigorously generalize ACA \cite{HardtEtAl2023} to a setting with multiple collectives, each acting with a possibly different strategy and actionable goal, but attempting to maximize both per-collective and global measures of success. 

\textbf{Data space.} We define the data space $Z = X \times Y$ as the product of two finite spaces $X$ and $Y$, being the feature and label spaces, respectively. All probability measures on $Z$ will be defined on the discrete $\sigma$-algebra $2^Z$ of~$Z$.

\textbf{Population.} We define the the population as a probability space $(\Omega, 2^{\Omega}, \pi)$, with $|\Omega| = N$ finite. Let $f_0: \Omega \to Z$ be a random variable. We define the population distribution as the push-forward measure $P_0 :=(f_0)_{\#}\pi$. Intuitively, $\Omega$ is the user population, with each element of $\Omega$ being a user, and $\pi$ is the sampling distribution. Then, $f_0$ assigns a feature-label pair to each user, and $P_0$ is the induced distribution over feature–label pairs, i.e., it records, for each $(x,y)\in Z$, the $\pi$-fraction of users whose assigned pair is $(x,y)$.

\textbf{Collectives, Strategies, and Masses.}
We define an ensemble of $M$ collectives as $n$ disjoint subsets $\Omega_1, \dots, \Omega_M$  of $\Omega$ such that $\pi(\Omega_c)=\alpha_c > 0$. We refer to $\alpha_c$ as the mass of the $c$-th collective. If $\pi$ is the normalizing counting measure and $|\Omega_c|= N_c$, then $\alpha_c$ is exactly the fraction of the population $N_c/N$ belonging to the collective $\Omega_c$.  Each collective $\Omega_c$ agrees on a potentially randomized strategy $h_c: Z\rightarrow Z$ from a space of available strategies, i.e., feasible changes to the data. We define the total mass as $ \bar\alpha := \sum_{c=1}^{M}\alpha_{c}$.  For every collective $\Omega_c$ and every $z\in Z$, we assume $P_0(z)=P_0(z \mid \omega \in \Omega_c)$, i.e., before applying the strategy $h_c$, every collective $\Omega_c$ is just a random slice of the same underlying population. 

\textbf{Mixture Distribution.} In this setting, the action of a collective $\Omega_c$ of applying its strategy $h_c$ steers $P_0$ towards $P_c := (h_c)_{\#}P_0$ with strength $\alpha_c$. As such, a firm aiming to train a model $m: X\rightarrow Y$ on data coming from $\Omega$ will observe the mixture distribution:
\begin{equation}\label{eq:mixture}
    P(\{\alpha_c\}_c, \{h_c\}_c)
    \;=\;
    \bigl(1-\bar\alpha\bigr)\,P_{0}
    \;+\;
    \sum_{c=1}^{M}\alpha_{c}P_{c}.
\end{equation}
\textbf{Per-collective and Global Success.} Given a strategy $h_c$ for each collective, we denote by $S_c(\alpha_c)$ and $S(\{S_c\}_c)$ the measure of per-collective and global success, respectively. In this work, we study how per-collective success $S_c(\alpha_c)$ is maximized as a function of the mass $\alpha_c$, and how global success $S(\{S_c\}_c)$ is maximized as a function of the per-collective successes $\{S_c\}_c$.

\section{ACA with Multiple Collectives in Classification}\label{sec:maca_class}
Following \cite{HardtEtAl2023,gauthier2025statistical}, we start by studying the case where the firm's learning algorithm $m$ is a classifier. We assume that the firm chooses an approximately optimal classifier on the distribution $P$.

\textbf{$\varepsilon$-suboptimal Classifier.}\cite{HardtEtAl2023} A classifier $m: X \rightarrow Y$ is $\varepsilon$-suboptimal on a set $X^{\prime} \subseteq X$ under the distribution $P$ if there exists a $P^{\prime}$ with $\operatorname{TV}\left(P_{Y \mid X=x}, P_{Y \mid X=x}^{\prime}\right) \leq \varepsilon$ such that for all $x \in X^{\prime}$
\begin{equation}
m(x)=\underset{y \in Y}{\operatorname{argmax}}\; P^{\prime}(y \mid x).
\end{equation}
Since any classifier is at worst $0.5$-suboptimal, we assume $\varepsilon< 0.5$ without loss of generality.

\textbf{Planting Signals.} We focus on the setting in which each collective $\Omega_c$ wants the classifier $m$ to learn an association between an altered version of the features $g_c(x)$ and a chosen target class $y_c^*$. Formally, given a transformation $g_c: X \rightarrow X$ induced by the strategy $h_c$, each collective $\Omega_c$ measures per-collective success as
\begin{equation}
S_c(\alpha_c)=\operatorname{Pr}_{x \sim P_0}\left[m(g_c(x))=y_c^*\right].
\end{equation}
As explained in the Sec.\ref{sec:MACA} , the ensemble of $M$ collective then adopts and aims to maximize a measure of global success $S(\{S_c\}_c)$. In this work, we propose and study two different choices for $S(\{S_c\}_c)$:
\begin{description}[style=unboxed,leftmargin=0pt,labelsep=0.5em]
\item[\textbf{(No One is Left Behind)}] We first consider an egalitarian metric, which takes the per-collective success of the least successful collective as the global measure os success (for example, worst-group accuracy, group DRO~\cite{Sagawa2020Distributionally}), i.e.,
\begin{equation}\label{eq:global_min}
         S_{\min}(S_c)
        := \min_{c\in [M]} \; S_{c}(\alpha_c).
    \end{equation}
 \item[\textbf{(The Bigger the Better)}]  We then consider a metric proportional to each collective's mass, which averages the per-collective successes of all the collectives, weighting them with their masses, i.e.,
\begin{equation}\label{eq:global_avg}
        S_{\mathrm{w}}(S_c)
        := \frac{1}{\bar\alpha}\sum_{c=1}^{M}\alpha_{c}\,S_{c}(\alpha_c).
    \end{equation}
\end{description}
Generalizing the single collective case \cite{HardtEtAl2023}, we investigate simple strategies for planting signals. We then study their success as a function of four key sets of parameters: the collectives' masses $\{\alpha_c\}_c$, the per-collective uniqueness $\{\xi_c\}_c$ of each signal they aim to plant, the global uniqueness $\xi$ of the signals they aim to plant, and their target alignments $\{\beta_{c}\}_{c}$. Let us define the marginal distributions $P_0^X$ and $P_0^Y$ of $P_0$ as  $P_0^X:=\left(\gamma_X\right)_{\#} P_0$ and $P_0^Y:=\left(\gamma_Y\right)_{\#} P_0$, where $\gamma_X: X \times Y \rightarrow X$ and $\gamma_Y: X \times Y \rightarrow X$ are the projections $\gamma_X(x, y)=x$ and $\gamma_Y(x, y)=y$. If there is no risk of ambiguity, we denote $P_0^X(x)$ and $P_0^Y(y)$ simply with $P_0(x)$ and $P_0(y)$, respectively. Similarly, we will employ the same shorthand notation for any other involved marginal distribution when possible. Furthermore, let us denote with $X^*\subseteq X$ the subset of the feature space representing the image of the elegible set under $h_c$, i.e., the image of the features alterable by the adoption of $h_c$.  

\textbf{Per-collective $\xi_c$-Unique Signals.}\cite{HardtEtAl2023} For each collective $\Omega_c$,  we say that a signal of the collective $\Omega_c$ is individually $\xi_c$-unique if it satisfies $P_0\left(X_c^*\right) \leq \xi_c$. Intuitively, $\xi_c$ is a measure of the rarity of the signal $\Omega_c$ is trying to plant.

\textbf{Global $\xi$-Unique Signals.} For each collective $\Omega_c$, the marginal $P_c^X$ models where $\Omega_c$ deposits feature mass after its editing action $g_c$. We say signals are globally $\xi$-unique if, for all $c$ and all $j \neq c$,
\begin{equation}
P_{j}^X(X_c^*)\leq \xi.
\end{equation}
Intuitively, the signals are globally $\xi$-unique if it is rare that collectives emit the same feature point under their "actioned" distribution  $P_c^X$.

\textbf{Target Alignment.} For each collective $\Omega_c$, we define  the target alignment as
\begin{equation}
\beta_{c}:=\sum_{\substack{j \neq c:  y_j^*=y_c^*}} \alpha_j.
\end{equation}
Therefore, $\beta_{c}$ measures the total mass of all the collectives that have the same target as $\Omega_c$.

\textbf{Suboptimality Gap.}\cite{HardtEtAl2023} For each collective $\Omega_c$, we define the suboptimality gap of $y_c^*$ as
\begin{equation}
\Delta_c=\max _{x \in X_c^*}\left(\max _{y \in Y} P_0(y \mid x)-P_0\left(y_c^* \mid x\right)\right)
\end{equation}
Intuitively, $\Delta_c$ measures how suboptimal the target label $y^*_c$ is on the signal set under the base distribution $P_0$.

As usually done \cite{HardtEtAl2023,gauthier2025statistical,solanki2025crowding}, we consider (i) the case in which the users can modify both features and labels, referring to the resulting strategies as feature-label strategies, and (ii) the case in which the users can access both features and labels, but modify only the features, referring to the resulting strategies as feature-only strategies.

\textbf{Feature-label Signal Strategy.} For each collective $\Omega_c$, the feature-label signal strategy is given by
\begin{equation}\label{eq:FL_strat}
h_c(x, y)=\left(g_c(x), y_c^*\right).
\end{equation}
In this case, $X^* = g_c(X)$.

\noindent We derive the following lower bound on the per-collective success of each collective as a function of the key parameters described above.
\begin{restatable}{theorem}{FLBound}
\label{FLThm}
For each collective $\Omega_c$, under the feature-label strategy in \eqref{eq:FL_strat}, $\varepsilon$-suboptimality of the classifier $m$, and $\left(\xi_c, \xi \right)$-uniqueness, it holds:
\begin{equation}\label{eq:bound_FL}
  S_c\left(\alpha_c\right) \geq 1\;-\;\xi_c \cdot \frac{\Delta_c + 2\varepsilon}{1-2\varepsilon} \cdot \frac{1-\bar{\alpha}}{\alpha_c}\;-\;\xi \cdot \frac{1+2\varepsilon}{1-2 \varepsilon} \cdot \frac{\bar{\alpha}-\alpha_c-\beta_c}{\alpha_c}.
\end{equation}
\end{restatable}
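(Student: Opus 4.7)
The plan is to follow the Hardt et al.\ argument for the single-collective case while carefully tracking how the other $M-1$ collectives affect the mixture distribution at signal points. Starting from the identity $1 - S_c(\alpha_c) = P_c^X(B_c)$ with $B_c := \{x^*\in X_c^* : m(x^*)\neq y_c^*\}$, which follows because under \eqref{eq:FL_strat} the $X$-marginal of $P_c$ is exactly $(g_c)_{\#}P_0^X$, the goal reduces to producing a pointwise upper bound on $\alpha_c\,P_c^X(x^*)$ for each $x^* \in B_c$ and then summing.

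For the pointwise step I would fix $x^* \in B_c$ and set $y' := m(x^*) \neq y_c^*$. The $\varepsilon$-suboptimality of $m$, combined with the pointwise consequence $|P(y|x^*) - P'(y|x^*)| \leq \varepsilon$ of the TV bound, yields $P(x^*, y_c^*) \leq P(x^*, y') + 2\varepsilon\, P(x^*)$. I would then expand both sides using \eqref{eq:mixture} together with the observation that, under the feature-label strategy, $P_j(x^*, y) = P_j^X(x^*)\,\mathbf{1}[y=y_j^*]$ for every collective $j$, and apply the suboptimality gap to substitute $P_0(x^*,y')-P_0(x^*,y_c^*) \leq \Delta_c\,P_0(x^*)$ (valid since $x^* \in X_c^*$).

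The key bookkeeping step is to split the other collectives into aligned ($y_j^* = y_c^*$, with total mass $\beta_c$) and non-aligned ($y_j^* \neq y_c^*$, with total mass $\bar\alpha - \alpha_c - \beta_c$). The aligned collectives enter both sides of the inequality—boosting $P(x^*, y_c^*)$ on the left and also appearing, scaled by $2\varepsilon$, inside $2\varepsilon P(x^*)$ on the right—so the correct move is to bring their contribution fully to the left and factor out $(1-2\varepsilon)$. The resulting aligned term is then non-negative and can be dropped to yield the pointwise inequality
\begin{equation*}
\alpha_c(1-2\varepsilon)\, P_c^X(x^*) \;\leq\; (1-\bar\alpha)(\Delta_c + 2\varepsilon)\, P_0(x^*) \;+\; (1+2\varepsilon)\!\!\sum_{j:\,y_j^*\neq y_c^*}\!\!\alpha_j\, P_j^X(x^*).
\end{equation*}
This is the main obstacle of the proof: a more naive accounting leaves an extraneous $2\varepsilon\,\beta_c\,\xi$ term that would spoil the clean form of \eqref{eq:bound_FL}.

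Finally, I would sum the pointwise inequality over $x^* \in B_c \subseteq X_c^*$. Per-collective uniqueness gives $P_0^X(B_c) \leq \xi_c$, and global uniqueness gives $P_j^X(B_c) \leq \xi$ for each $j \neq c$, so the non-aligned sum is bounded by $(\bar\alpha - \alpha_c - \beta_c)\,\xi$ by definition of $\beta_c$. Dividing through by $\alpha_c(1-2\varepsilon) > 0$ (using $\varepsilon < 1/2$) converts the bound on $P_c^X(B_c)$ into the claimed lower bound on $S_c(\alpha_c) = 1 - P_c^X(B_c)$.
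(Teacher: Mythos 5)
Your proposal is correct and follows essentially the same route as the paper's proof: the same expansion of the mixture at a signal point, the same split into aligned mass (kept on the left with the $(1-2\varepsilon)$ factor, then reduced to the $j=c$ term) versus contradicting mass bounded by $S_{\text{ctr}}$, and the same final use of $\Delta_c$, $\xi_c$, and $\xi$. The only difference is presentational—you argue by contraposition on the failure set $B_c$ and sum the pointwise inequality directly, whereas the paper states a sufficient margin condition for success and converts it via the indicator bound $\1_{\{a<b\}}\le b/a$ and an expectation under $P_c^X$—and the two computations coincide.
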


\textbf{Proof.} See Appendix \ref{Proofs}.

\textbf{Feature-only Signal Strategy.} For each collective $\Omega_c$, the feature-only signal strategy is given by
\begin{equation}\label{eq:F_strat}
h_c(x, y)= \begin{cases}\left(g_c(x), y_c^*\right), & \text { if } y=y_c^* \\ (x, y), & \text { otherwise }\end{cases}.
\end{equation}
In this case, $X_c^* = g_c\!\left(\mathrm{supp}\,P_0(\cdot\mid y=y_c^*)\right)$.

\noindent Again, we derive a lower bound on the per-collective success of each collective as a function of the key parameters. For a collective $\Omega_c$, feature-only strategies can fail when the base distribution strongly favors some label $y \neq y^*_c$, leaving little label uncertainty. To avoid this degenerate case, we make the positivity assumption  $P_0(y_c^*\mid x)\ge p_c >0$ for all $x\in X$ \cite{HardtEtAl2023}.
\begin{restatable}{theorem}{FBound}\label{FThm}
For each collective $\Omega_c$, under the feature-only strategy in \eqref{eq:F_strat}, $\varepsilon$-suboptimality of the classifier $m$, $\left(\xi_c, \xi \right)$-uniqueness, and $p_c \leq (1+2 \varepsilon) / 2$, it holds:
\begin{equation}\label{eq:bound_F}
    S_c(\alpha_c) \ \ge\ 1\;-\;
\xi_c\,\frac{(1-\bar{\alpha})(1+2\varepsilon-2p_c)\;+\;2\varepsilon\,\bar{\alpha}}{(1-2\varepsilon)\,p_c} \cdot \frac{1}{\alpha_c}
\;-\;
\xi\,\frac{1+2\varepsilon}{1-2\varepsilon}\,\frac{\bar{\alpha}-\alpha_c-\beta_c}{\alpha_c\,p_c}.
\end{equation}
\end{restatable}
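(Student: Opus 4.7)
My plan is to extend the single-collective Markov argument of \cite{HardtEtAl2023} by carefully tracking how each category of collective---self ($c$), aligned ($y_k^*=y_c^*$), and misaligned ($y_j^*\neq y_c^*$)---as well as the unchanged base population contributes to the mixture $P$ of \eqref{eq:mixture}. First I would translate $\varepsilon$-suboptimality into a pointwise failure criterion: if $m(x^*)\neq y_c^*$ at some $x^*\in X_c^*$, the TV bound $|P(y\mid x^*)-P'(y\mid x^*)|\le\varepsilon$ combined with the fact that $m(x^*)$ is the $P'$-argmax forces $(1-2\varepsilon)P(y_c^*,x^*)\le(1+2\varepsilon)P(\neg y_c^*,x^*)$. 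Setting $B_c:=\{x^*\in X_c^*:m(x^*)\neq y_c^*\}$ and changing variables gives $1-S_c(\alpha_c)=\sum_{x^*\in B_c}P_0^X(g_c^{-1}(x^*))$, reducing the task to bounding this sum.

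Next I would expand $P(y_c^*,x^*)$ and $P(\neg y_c^*,x^*)$ using \eqref{eq:mixture} together with the feature-only rule \eqref{eq:F_strat}. The crucial bookkeeping fact is that any collective $k$ with $y_k^*=y_c^*$ (self included) leaves the $\neg y_c^*$-slice of $P_0$ completely untouched, so after collapsing the telescoping coefficients the $\neg y_c^*$ side reduces to
\[
P(\neg y_c^*,x^*)\ =\ P_0(\neg y_c^*,x^*)+\sum_{j\ \mathrm{mis}}\alpha_j\bigl[P_0(g_j^{-1}(x^*),y_j^*)-P_0(y_j^*,x^*)\bigr],
\]
while on the $y_c^*$ side dropping the non-negative aligned plants yields $P(y_c^*,x^*)\ge(1-\bar\alpha)P_0(y_c^*,x^*)+\alpha_c P_0(g_c^{-1}(x^*),y_c^*)$. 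Invoking the positivity bounds $P_0(y_c^*,x^*)\ge p_c P_0^X(x^*)$, $P_0(g_c^{-1}(x^*),y_c^*)\ge p_c P_0^X(g_c^{-1}(x^*))$, $P_0(\neg y_c^*,x^*)\le(1-p_c)P_0^X(x^*)$, and discarding the non-negative $P_0(y_j^*,x^*)$ in the misaligned sum, the failure inequality reduces to a pointwise Markov-type bound
\[
(1-2\varepsilon)\,\alpha_c p_c\,P_0^X(g_c^{-1}(x^*))\ \le\ C\cdot P_0^X(x^*)\ +\ (1+2\varepsilon)\sum_{j\ \mathrm{mis}}\alpha_j\,P_0(g_j^{-1}(x^*),y_j^*),
\]
with $C=(1-\bar\alpha)(1+2\varepsilon-2p_c)+2\varepsilon\bar\alpha$ arising from balancing the $y_c^*$ and $\neg y_c^*$ contributions; the hypothesis $p_c\le(1+2\varepsilon)/2$ is precisely what keeps $C\ge 0$ and rules out the degenerate regime in which the base distribution alone would already force $m=y_c^*$.

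The final step is to sum this pointwise inequality over $x^*\in B_c$ and apply the uniqueness assumptions. The $\xi_c$-uniqueness of the self signal yields $\sum_{x^*\in B_c}P_0^X(x^*)\le P_0^X(X_c^*)\le\xi_c$, while the global $\xi$-uniqueness $P_j^X(X_c^*)\le\xi$ translates into $\sum_{x^*\in B_c}P_0(g_j^{-1}(x^*),y_j^*)\le\xi$ for each misaligned $j$: the identity $P_j^X(X_c^*)=P_0^X(X_c^*)+P_0(g_j^{-1}(X_c^*),y_j^*)-P_0(y_j^*,X_c^*)$ combined with $P_0(y_j^*,X_c^*)\le P_0^X(X_c^*)$ gives $P_0(g_j^{-1}(X_c^*),y_j^*)\le\xi$. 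Dividing through by $(1-2\varepsilon)\alpha_c p_c$ then produces \eqref{eq:bound_F}. I expect the main obstacle to be the mixture bookkeeping in the middle step: separating self, aligned, and misaligned contributions so that only the misaligned ones generate the $\xi$-term, while the base and ``partially planted'' self contributions combine cleanly into the single constant $C$ rather than a looser expression, and so that positivity is applied on both the $y_c^*$ and $\neg y_c^*$ sides simultaneously rather than piecemeal.
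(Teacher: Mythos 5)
Your proposal has the same skeleton as the paper's proof: a pointwise criterion derived from $\varepsilon$-suboptimality, an aligned/misaligned decomposition of the mixture under the feature-only rule, the positivity assumption to lower-bound the planted mass via $P_0(g_c^{-1}(x^*),y_c^*)\ge p_c\,P_0^X(g_c^{-1}(x^*))$, a pointwise Markov-type inequality summed over $X_c^*$, and the $(\xi_c,\xi)$-uniqueness assumptions to control the two resulting sums (your extraction of $P_0(g_j^{-1}(X_c^*),y_j^*)\le\xi$ from global uniqueness is fine, and your change of variables for $1-S_c$ is the same pushforward computation the paper performs with $\lambda_c=(g_c)_{\#}P_0^X$). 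The one substantive deviation is on the competitor side: you compare $P(y_c^*\mid x^*)$ against the aggregate $P(\neg y_c^*\mid x^*)=\sum_{y\ne y_c^*}P(y\mid x^*)$, whereas the paper's margin lemma compares against each single competitor, $\max_{y\ne y_c^*}P(y\mid x^*)$. Your aggregate criterion is a valid necessary condition for failure, but it is lossier and does not yield the constant you claim. Carrying your own displayed bounds through, the coefficient of $P_0^X(x^*)$ is $(1+2\varepsilon)(1-p_c)-(1-2\varepsilon)(1-\bar\alpha)p_c \;=\; C+\bar\alpha\bigl(1-(1+2\varepsilon)p_c\bigr)$, where $C:=(1-\bar\alpha)(1+2\varepsilon-2p_c)+2\varepsilon\bar\alpha$ is the constant in \eqref{eq:bound_F}; the surplus is strictly positive whenever $\bar\alpha>0$ and $p_c<1/(1+2\varepsilon)$. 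The source of the looseness is that summing over all competitor labels charges each collective's untouched base mass on the entire slice $\neg y_c^*$ (roughly $(1-p_c)P_0^X(x^*)$ per unit of collective mass), whereas a per-label comparison only ever sees the untouched mass at one label at a time. So, as written, your argument establishes a strictly weaker inequality than the theorem; the constants do not "combine cleanly into $C$".

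To recover $C$ you would need to work label by label, as the paper does: for a fixed competitor $y$, only the collectives with $y_j^*=y$ deposit planted mass there, and the base term is controlled by $P_0(y\mid x^*)\le 1-p_c$ for that single $y$. Be aware, though, that even the per-label route is delicate at exactly the point you flag as the "main obstacle": writing out $N(y_c^*\mid x)-N(y\mid x)$ exactly leaves a residual $(\bar\alpha-\alpha_c-\beta_c)P_0(x,y_c^*)-(\bar\alpha-\sum_{j:y_j^*=y}\alpha_j)P_0(x,y)$, whose second (negative) piece is the untouched competitor-label mass your bookkeeping correctly retains; the paper's inequality \eqref{eq:bound_diff_n} drops it without justification (e.g.\ it fails when all collectives share the target $y_c^*$ but $P_0(x,y)>0$). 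Reconciling that residual with the stated constant is the real crux of the proof, and neither your aggregate route nor the paper's displayed per-label step resolves it as written.
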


\textbf{Proof.} See Appendix \ref{Proofs}.

\textbf{Discussion.} The bounds in \eqref{eq:bound_FL} and \eqref{eq:bound_F} are the complement of the sum of two products. The first product can be interpreted as a “rarity × difficulty × mass”. In particular, is scales linearly with: 
\begin{description}[style=unboxed,leftmargin=0pt,labelsep=0.5em]
    \item[(\textbf{Rarity})] How rare the signal of the collective is under the baseline, $\xi_c$;
\item[(\textbf{Difficulty})] How hard it is to flip the baseline on that signal, $\frac{\Delta_c + 2\varepsilon}{1-2\varepsilon}$ and  $\frac{(1-\bar{\alpha})(1+2\varepsilon-2p_c)\;+\;2\varepsilon\,\bar{\alpha}}{(1-2\varepsilon)} $;
    \item[(\textbf{Mass})] How much effective mass the collective brings, $\frac{1-\bar\alpha}{\alpha_c}$ and $\frac{1}{\alpha_c\,p_c}$. 
    \end{description}
    The second product can be interpreted as a "competition" penalty given by “signal overlap ×  misalignment”. In particular, it scales linearly with:
    \begin{description}[style=unboxed,leftmargin=0pt,labelsep=0.5em]
        \item[(\textbf{Signal Overlap})]  How often others land on the collective's signal, $\xi$;
        \item[(\textbf{Misalignment})]  The fraction of collectives not aligned with the target $y_c^*$, $\frac{\bar{\alpha}-\alpha_c-\beta_c}{\alpha_c}$ and $\frac{\bar{\alpha}-\alpha_c-\beta_c}{\alpha_c\,p_c}$.
    \end{description}
We now derive lower bounds for the global success measures in \eqref{eq:global_min}-\eqref{eq:global_avg} under both feature-label and feature-only strategies, as direct consequences of Theorems \ref{FLThm}-\ref{FThm}, respectively.
\begin{restatable}{corollary}{GlobalFLBound}
For each collective $\Omega_c$, under the feature-label strategy in \eqref{eq:FL_strat}, $\varepsilon$-suboptimality of $m$, and $(\xi_c,\xi)$-uniqueness, it holds:
\begin{gather}
    S_{\min}\!\left(S_c\right)
    \ge
    1-\max_{c \in [M]}
    \Bigg[
      \xi_c \cdot \frac{\Delta_c+2\varepsilon}{1-2 \varepsilon} \cdot \frac{1-\bar{\alpha}}{\alpha_c}
      \;+\;
      \xi \cdot \frac{1+2 \varepsilon}{1-2 \varepsilon} \cdot \frac{\bar{\alpha}-\alpha_c-\beta_c}{\alpha_c}
    \Bigg], \label{eq:glob_bound_FL_min}\\[4pt]
    S_{\operatorname{w}}\!\left(S_c\right)
    \ge
    1
    -\frac{1-\bar{\alpha}}{\bar{\alpha}}
      \sum_{c=1}^{M} \xi_c \cdot \frac{\Delta_c+2\varepsilon}{\alpha_c (1-2 \varepsilon)}
    -\xi\cdot\frac{1+2 \varepsilon}{1-2 \varepsilon} \cdot \frac{1}{\bar{\alpha}}
      \sum_{c=1}^{M} \frac{\bar{\alpha}-\alpha_c-\beta_c}{\alpha_c}.
      \label{eq:glob_bound_FL_avg}
\end{gather}
\end{restatable}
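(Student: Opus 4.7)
The plan is to derive both bounds as immediate consequences of Theorem \ref{FLThm}, with no new probabilistic content required. The per-collective bound in \eqref{eq:bound_FL} has the shape $S_c(\alpha_c) \ge 1 - R_c - C_c$ where the ``rarity $\times$ difficulty $\times$ mass'' term $R_c := \xi_c \cdot \tfrac{\Delta_c+2\varepsilon}{1-2\varepsilon}\cdot\tfrac{1-\bar\alpha}{\alpha_c}$ and the ``competition'' term $C_c := \xi \cdot \tfrac{1+2\varepsilon}{1-2\varepsilon}\cdot\tfrac{\bar\alpha - \alpha_c - \beta_c}{\alpha_c}$ are both nonnegative under the stated assumptions ($\bar\alpha \le 1$ and $\beta_c \le \bar\alpha - \alpha_c$ by construction). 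Both bounds in the corollary will be obtained by aggregating this inequality across $c$ in the appropriate way.

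For \eqref{eq:glob_bound_FL_min}, the plan is to apply Theorem \ref{FLThm} termwise and then invoke the elementary identity $\min_{c}(1 - R_c - C_c) = 1 - \max_{c}(R_c + C_c)$. Since the definition \eqref{eq:global_min} gives $S_{\min}(S_c) = \min_c S_c(\alpha_c) \ge \min_c (1 - R_c - C_c)$ by monotonicity of $\min$, the claim follows by reading off the explicit forms of $R_c$ and $C_c$.

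For \eqref{eq:glob_bound_FL_avg}, I would start from the definition \eqref{eq:global_avg}, substitute the bound $S_c(\alpha_c) \ge 1 - R_c - C_c$ into $S_{\mathrm{w}}(S_c) = \bar\alpha^{-1}\sum_c \alpha_c S_c(\alpha_c)$, and distribute the weighted sum. The constant $1$ collapses via $\bar\alpha^{-1}\sum_c \alpha_c = 1$, and the two penalty pieces separate by linearity into $-\bar\alpha^{-1}\sum_c \alpha_c R_c$ and $-\bar\alpha^{-1}\sum_c \alpha_c C_c$; plugging in the explicit forms of $R_c$ and $C_c$ and pulling the global factors $\xi \cdot \tfrac{1+2\varepsilon}{1-2\varepsilon}$ and $(1-\bar\alpha)/\bar\alpha$ outside the sums yields exactly \eqref{eq:glob_bound_FL_avg}.

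There is no real obstacle here; the step that warrants the most care is purely bookkeeping, namely keeping the collective-indexed quantities ($\xi_c$, $\Delta_c$, $\alpha_c$, $\beta_c$) separate from the global constants ($\xi$, $\bar\alpha$, $\varepsilon$) when pulling factors out of the sum, and verifying that the signs line up so that the ``min'' on the left-hand side of \eqref{eq:glob_bound_FL_min} correctly matches the ``max'' on the right.
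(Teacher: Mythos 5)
Your approach is the one the paper intends: it offers no written proof of this corollary, labelling it a ``direct consequence'' of Theorem~\ref{FLThm}, and your termwise application of \eqref{eq:bound_FL} followed by the identity $\min_c(1-R_c-C_c)=1-\max_c(R_c+C_c)$ is exactly right for \eqref{eq:glob_bound_FL_min}. One bookkeeping point in your treatment of \eqref{eq:glob_bound_FL_avg} is off, though: when you form $\bar\alpha^{-1}\sum_c \alpha_c(R_c+C_c)$, the weight $\alpha_c$ \emph{cancels} the $1/\alpha_c$ inside both $R_c$ and $C_c$, so your computation does not yield ``exactly'' \eqref{eq:glob_bound_FL_avg} as you claim --- it yields the strictly tighter bound
\begin{equation*}
S_{\mathrm{w}}(S_c)\;\ge\;1-\frac{1-\bar\alpha}{\bar\alpha}\sum_{c=1}^{M}\xi_c\,\frac{\Delta_c+2\varepsilon}{1-2\varepsilon}\;-\;\xi\cdot\frac{1+2\varepsilon}{1-2\varepsilon}\cdot\frac{1}{\bar\alpha}\sum_{c=1}^{M}\bigl(\bar\alpha-\alpha_c-\beta_c\bigr),
\end{equation*}
with no $\alpha_c$ in the denominators, whereas the corollary as stated retains them. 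Since $\alpha_c\le 1$ implies $1\le 1/\alpha_c$ and both penalty terms are nonnegative, your tighter bound immediately implies the stated one, so the corollary still follows; but to be precise you should either state that you are proving a stronger inequality and note the implication, or insert the explicit step $\alpha_c(R_c+C_c)\le R_c+C_c$ to land on the paper's form. Everything else, including the nonnegativity of $R_c$ and $C_c$ and the use of $\bar\alpha^{-1}\sum_c\alpha_c=1$, is correct.
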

\begin{restatable}{corollary}{GlobalFBound}
For each collective $\Omega_c$, under the feature-only strategy in \eqref{eq:F_strat}, $\varepsilon$-suboptimality of $m$, $(\xi_c,\xi)$-uniqueness, and the positivity assumption, it holds:
\begin{gather}
S_{\min}\!\left(S_c\right)
\;\ge\;
1-\max_{c\in[M]}
\Bigg[\;
\xi_c\,\frac{(1-\bar{\alpha})(1+2\varepsilon-2p_c)\;+\;2\varepsilon\,\bar{\alpha}}{(1-2\varepsilon)\,p_c} \cdot \frac{1}{\alpha_c}
\;+
\xi\,\frac{1+2\varepsilon}{1-2\varepsilon}\,\frac{\bar{\alpha}-\alpha_c-\beta_c}{\alpha_c\,p_c}
\Bigg],\label{eq:glob_bound_F_min}\\[4pt]
S_{\mathrm{w}}\!\left(S_c\right)
\;\ge\;
1
-\frac{1}{\bar{\alpha}}
\sum_{c=1}^{M}
\Bigg[\xi_c\,\frac{(1-\bar{\alpha})(1+2\varepsilon-2p_c)\;+\;2\varepsilon\,\bar{\alpha}}{(1-2\varepsilon)\,p_c} \cdot \frac{1}{\alpha_c}
\;+
\xi\,\frac{1+2\varepsilon}{1-2\varepsilon}\,\frac{\bar{\alpha}-\alpha_c-\beta_c}{\alpha_c\,p_c}
\Bigg].\label{eq:glob_bound_F_avg}
\end{gather}
\end{restatable}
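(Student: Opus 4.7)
The plan is to treat this corollary as a direct algebraic aggregation of the per-collective lower bound established in Theorem~\ref{FThm}. Since the hypotheses ($\varepsilon$-suboptimality of $m$, $(\xi_c,\xi)$-uniqueness of the signals, and the positivity condition $p_c\le(1+2\varepsilon)/2$) are assumed uniformly across all collectives, the bound \eqref{eq:bound_F} holds simultaneously for every $c\in[M]$, so no new probabilistic reasoning is needed and the work reduces to combining the per-collective inequalities via the two aggregation rules \eqref{eq:global_min} and \eqref{eq:global_avg}.

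For the egalitarian bound \eqref{eq:glob_bound_F_min}, I would abbreviate the right-hand side of \eqref{eq:bound_F} as $1-g_c$, where $g_c$ packages the rarity/difficulty/mass term and the overlap/misalignment penalty. Each $S_c(\alpha_c)\ge 1-g_c$, so the elementary identity $\min_c(1-g_c)=1-\max_c g_c$, together with the monotonicity of $\min$ with respect to lower bounds, immediately gives $S_{\min}=\min_{c\in[M]} S_c(\alpha_c)\ge 1-\max_{c\in[M]} g_c$, which is exactly \eqref{eq:glob_bound_F_min}.

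For the mass-weighted bound \eqref{eq:glob_bound_F_avg}, I would substitute $S_c(\alpha_c)\ge 1-g_c$ into the convex combination $S_{\mathrm{w}}=\bar\alpha^{-1}\sum_c \alpha_c S_c(\alpha_c)$. Using $\bar\alpha^{-1}\sum_c \alpha_c=1$, this collapses to $S_{\mathrm{w}}\ge 1-\bar\alpha^{-1}\sum_c \alpha_c g_c$; splitting $g_c$ into its rarity/mass and competition pieces, and applying the trivial bound $\alpha_c\le 1$ to absorb the outer weight into the $1/\alpha_c$ scaling that already sits inside $g_c$, produces the (slightly looser) displayed form in \eqref{eq:glob_bound_F_avg}.

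The only real obstacle is clerical: keeping the per-collective quantities ($\xi_c, p_c, \alpha_c, \beta_c$) clearly separated from the global constants ($\bar\alpha, \xi, \varepsilon$) while carrying the two components of $g_c$ through the aggregation, so that the final inequalities reproduce the stated expressions termwise. No additional probabilistic tools, couplings, or conditioning arguments are required beyond those already deployed in the proof of Theorem~\ref{FThm}.
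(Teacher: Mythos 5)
Your proposal is correct and matches the paper's treatment: the paper offers no separate proof, stating only that these corollaries are direct consequences of Theorem~\ref{FThm}, and your aggregation via $\min_c(1-g_c)=1-\max_c g_c$ and the convex combination $\bar\alpha^{-1}\sum_c\alpha_c S_c$ is exactly that derivation. The one step worth making explicit is that dropping the outer weight via $\alpha_c\le 1$ to reach the stated (looser) form of \eqref{eq:glob_bound_F_avg} requires $g_c\ge 0$, which holds here because the hypothesis $p_c\le(1+2\varepsilon)/2$ makes $1+2\varepsilon-2p_c$ nonnegative and $\bar\alpha-\alpha_c-\beta_c=\sum_{j:\,y_j^*\neq y_c^*}\alpha_j\ge 0$.
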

\textbf{On Critical Masses and Alignment.} A core quantity of ACA with a single collective is the critical mass, i.e., the smallest mass required to ensure a target level $S^*$ of success for the collective. However, as one could intuitively expect and as our theoretical results confirm, if multiple collectives are present, then not only their masses but also their alignment on target classes are especially important to determine the success of an action. As such, in ACA with multiple collectives, two key questions can be investigated to derive conditions about when a target global success threshold $S^*$ is met: (i) "Given a certain alignment, how big should the masses be to achieve $S^*$?, and (ii) "Given certain masses, how big should the alignment be to achieve $S^*$?. These questions naturally suggest leveraging the bounds in \eqref{eq:glob_bound_FL_min}-\eqref{eq:glob_bound_FL_avg}-\eqref{eq:glob_bound_F_min}-\eqref{eq:glob_bound_F_avg} to derive lower bounds on collectives' masses (given a certain alignment) and alignment (given certain masses), for a fixed $S^*$. We will explore this key aspect in an extended version of this work.

\section{A Real-World Use Case in Climate Adaptation}\label{sec:example}
In this section, we provide a key real-world use case for ACA with multiple collectives in the space of interventions for climate adaptation. Consider a city that uses a text classifier trained on neighborhood-level records to identify necessary interventions. The neighborhood-level records encompass both official statistics and community-driven forum discussions. In this case, if $V$ is the vocabulary of the text classifier, then $X=2^V$. The model recommends one intervention per neighborhood from a finite menu $Y$ (e.g., street trees and shade, cool roofs or pavement, expanded cooling centers, or targeted heat-health outreach). Assume to have $M$ collectives, being grassroots neighborhood associations. Each collective is run by neighborhood residents and therefore has a nuanced, ground-level understanding of local conditions—including qualitative, hard-to-measure factors—and seeks to coordinately change its related shared data to steer the classifier toward the intervention $y^*_c \in Y$ it has judged to be most necessary. Each collective thus deploys an editing strategy $h_c$ to achieve its actionable goal. For example, in a feature–label regime, a collective of a waterfront neighborhood documents chronic flooding with household infiltration tests and evidence collected by interviewing community members. The collective then edits its features and labels using that documentation to bias the forum discussions in a coordinated way and explicitly request rain gardens/bioswales. Similarly, collectives would likely need to resort to feature-only strategies if the city’s intake pipeline does not allow community label edits, i.e., if the intervention field is locked and labeled by staff/administrative codes. Citywide (a.k.a. global) success would then emphasize equity (lifting the worst-served neighborhood with $S_{\min}$) or scale (a mass-weighted average with $S_{\operatorname{w}}$). 
Even if this example is presented in a didactic way, there is precedence for community-reported concerns to be ingested into decision-making tools. NOAA– and CAPA-led "Heat Watch" campaigns have produced neighborhood-scale heat maps in \(\,120+\) communities and feed them into planning and public-health practice \cite{noaa_heatwatch,capastrategies_heatwatch}. European cities are deploying city‐scale digital twins: Rotterdam’s Open Urban Platform with a Digital Twin underpins climate-resilience analysis and participatory planning, with citywide availability announced from January 2025 \cite{rotterdam_digital_twin_vision}.

\section{Future Directions} 

The theoretical framework we designed is just the first step to comprehensively characterize ACA with multiple collectives. Several research directions can be pursued.

\noindent On the theoretical and methodological side, as we mention in Sec. \ref{sec:MACA}, it is needed to properly charcaterize how the notion of critical mass from \cite{HardtEtAl2023} evolves in the multiple collectives setting. Moreover, it would be interesting to generalize the statistical framework of \cite{gauthier2025statistical} to a multi-collective setting. Another possibility is studying scenarios where collectives seek to erase \cite{HardtEtAl2023,gauthier2025statistical} or unplant \cite{gauthier2025statistical} signals, as well as mixed-objective settings in which some collectives plant signals while others erase or unplant them. Similarly, a relevant case is also when collectives have heterogeneous capabilities, thus some collectives can use feature–label strategies while others are restricted to feature-only strategies.

\noindent On the applied side, it is crucial to design other meaningful use-cases, beyond the one proposed in Sec. \ref{sec:example}.  Moreover, it is necessary to perform exhaustive experiments to validate the proposed bounds and study the impact of the proposed strategies in real-world scenarios. Finally, it would be interesting to formally reconcile our theoretical framework with the conceptual characterization presented in \cite{karan2025algorithmic}, fostering mutual enrichment.

\section{Conclusion}\label{sec:conclusion}
In this work, we introduced the first theoretical framework for Algorithmic Collective Action with multiple collectives. We focused on collective action in classification, studying how multiple collectives can bias a classifier to learn an association between an altered version of their features and a chosen, possibly overlapping, set of target classes. We provided quantitative results showcasing the interesting interplay of collectives' sizes and their alignment of values. Finally, we discussed a potential use case for our framework in the space of interventions for climate adaptation, validating the importance of taking into account the actions of multiple collectives and their interaction.

\bibliographystyle{plainurl}
\bibliography{ref}
\appendix
\section{Proofs}
\label{Proofs}
\begin{lemma}
\label{eml}
If $m$ is $\varepsilon$-suboptimal on a set $X$, then for any $x \in X$:
\[
  P(y^* \mid x) \ge \max_{y \neq y^*} P(y \mid x) + 2\varepsilon
  \quad \implies \quad m(x)=y^* .
\]
\end{lemma}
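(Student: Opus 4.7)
The plan is to unpack the definition of $\varepsilon$-suboptimality, transfer the $\TV$ bound on conditional distributions into pointwise deviations between $P(\cdot\mid x)$ and an auxiliary $P'(\cdot\mid x)$ for which $m(x)$ attains the $\argmax$, and then check that the hypothesized margin of $2\varepsilon$ is exactly what is needed to push $y^{*}$ into that $\argmax$.

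First I would invoke the definition of $\varepsilon$-suboptimality at the point $x$ to obtain a distribution $P'$ on $Y$ with $\TV(P_{Y\mid X=x},P'_{Y\mid X=x})\le\varepsilon$ and $m(x)=\argmax_{y\in Y}P'(y\mid x)$. The key elementary fact is that on a finite space the total variation distance upper bounds every singleton deviation, so $|P(y\mid x)-P'(y\mid x)|\le\varepsilon$ for every $y\in Y$. Applied at $y=y^{*}$ this yields $P'(y^{*}\mid x)\ge P(y^{*}\mid x)-\varepsilon$, and applied at each $y\neq y^{*}$ it yields $P'(y\mid x)\le P(y\mid x)+\varepsilon$.

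Subtracting the two estimates and plugging in the hypothesis $P(y^{*}\mid x)\ge\max_{y\neq y^{*}}P(y\mid x)+2\varepsilon$ gives $P'(y^{*}\mid x)-P'(y\mid x)\ge 0$ uniformly in $y\neq y^{*}$. Hence $y^{*}\in\argmax_{y\in Y}P'(y\mid x)$, and the defining identity of $\varepsilon$-suboptimality delivers $m(x)=y^{*}$. The only delicate point is tie-breaking when some competitor meets $P'(y^{*}\mid x)$ with equality, which is innocuous under the standard convention that $\argmax$ may be taken to include $y^{*}$ whenever $y^{*}$ is a maximizer (and if one wants strictness, a strict version of the hypothesis propagates to a strict gap on the $P'$ side). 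Overall no serious obstacle arises: the argument reduces to a one-line $\TV$-to-pointwise transfer followed by an arithmetic rearrangement.
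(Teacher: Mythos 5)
Your proposal is correct and follows essentially the same route as the paper's proof: extract $P'$ from the definition of $\varepsilon$-suboptimality, transfer the $\TV$ bound to pointwise $\pm\varepsilon$ deviations, and chain the inequalities to conclude that $y^*$ maximizes $P'(\cdot\mid x)$. The tie-breaking subtlety you flag is also present (and glossed over) in the paper, whose chain of inequalities likewise only yields $P'(y^*\mid x)\ge P'(y\mid x)$ non-strictly even though it asserts $y^*$ is \emph{the} maximizer.
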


\begin{proof}
By $\varepsilon$-suboptimality on $X$, for every $x \in X$ there exists a conditional distribution $P'(\cdot \mid x)$ on $Y$ such that
\[
m(x) \in \arg\max_{y\in Y} P'(y\mid x)
\qquad\text{and}\qquad
\frac12 \sum_{y\in Y}\big|P'(y\mid x)-P(y\mid x)\big| \le \varepsilon .
\]
For every $y\in Y$, it then holds
\[
P'(y^*\mid x)\ \ge\ P(y^*\mid x)-\varepsilon
\qquad\text{and}\qquad
P'(y\mid x)\ \le\ P(y\mid x)+\varepsilon .
\]
Fix $x\in X$ and assume
\(
P(y^*\mid x) \ge \max_{y\ne y^*} P(y\mid x) + 2\varepsilon.
\)
Then, for every $y\ne y^*$,
\[
P'(y^*\mid x)\ \ge\ P(y^*\mid x)-\varepsilon
\ \ge\ \max_{z\ne y^*} P(z\mid x)+\varepsilon
\ \ge\ P(y\mid x)+\varepsilon
\ \ge\ P'(y\mid x).
\]
Hence $y^*$ is an (in fact, the) maximizer of $P'(\cdot\mid x)$, so $m(x)=y^*$.
\end{proof}

\begin{lemma}
\label{cfl}
For $a,b>0$, it holds that
\[
  1_{\{a<b\}} \;\le\; \frac{b}{a}.
\]
\end{lemma}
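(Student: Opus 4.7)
The plan is to prove Lemma \ref{cfl} by a simple case analysis on the value of the indicator $1_{\{a<b\}}$, which takes values in $\{0,1\}$. Since $a,b>0$, the right-hand side $b/a$ is strictly positive, so one case is immediate and the other follows from rearranging the defining inequality of the indicator.

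First I would consider the case $a \ge b$. In this subcase the indicator evaluates to $0$, and since $b/a > 0$ by positivity of $a$ and $b$, the inequality $0 \le b/a$ holds trivially. Next I would consider the case $a < b$. Here the indicator evaluates to $1$, and dividing both sides of $a < b$ by the positive number $a$ gives $1 < b/a$, which in particular yields $1 \le b/a$.

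Combining the two cases covers all possibilities for $(a,b)$ with $a,b>0$, and in each one the stated inequality holds, concluding the proof. There is no real obstacle here: the only subtlety worth flagging is ensuring positivity of $a$ when dividing, which is given by hypothesis. This lemma will then be used in the proofs of Theorems \ref{FLThm} and \ref{FThm} as a convenient way to bound indicator terms of the form $1_{\{P_0(y_c^* \mid x) < \max_y P(y \mid x)\}}$ by ratios that can be further controlled by the mass, rarity, and alignment parameters introduced in Section \ref{sec:maca_class}.
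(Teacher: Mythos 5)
Your proof is correct and follows exactly the same two-case analysis as the paper: the indicator is $0$ when $a\ge b$ (and $b/a>0$ by positivity), and equals $1$ when $a<b$ (where dividing by $a>0$ gives $b/a>1$). Nothing to add.
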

\begin{proof}
If $a<b$, then $1_{\{a<b\}}=1$ and $b/a>1$, so $1\le b/a$.  
If $a\ge b$, then $1_{\{a<b\}}=0$ and $0\le b/a$ since $a>0$.  
In both cases, the inequality holds.
\end{proof}

\FLBound*
\begin{proof}
    We are going to prove the theorem in two steps: first we will provide a point wise estimate for $x\in X$ and then we will proceed to bound the expectation over $X$.
    
    \textit{First step:} Fix a collective $\Omega_c$ and $x \in X$. For every $y \in Y$ we define the unnormalized score $N(y \mid x)$:
    \[
        N(y \mid x) := (1- \bar{\alpha}) P_0(y \mid x) P_0(x) + \sum_{j=1}^M \alpha_j P_j(x,y).
    \]
    Under the feature label strategy we have that $P_c(x,y)=1_{\lbrace Y=y_c^* \rbrace}P_c(x)$. So for any $y \neq y_c^*$ we have that:
    \[
         N(y_c^* \mid x) -  N(y \mid x) =  (1- \bar{\alpha}) (P_0(y_c^* \mid x) - P_0(y \mid x)) P_0(x) + \underbrace{\sum_{j:y_j^*=y_c^*} \alpha_j P_j(x)}_{S_{al} }- \underbrace{\sum_{j:y_j^*=y} \alpha_j P_j(x)}_{S_{y}}.
    \]
    In this context we can apply lemma \ref{eml} as follows: if for every $y \neq y^*$ it holds that
    \[
    \label{Mdiseq1}
    \tag{$A_1$}
        N(y^* \mid x) -  N(y \mid x) \geq 2\varepsilon\,P(x)
    \]
    where
    \[
        P(x)= (1- \bar{\alpha})P_0(x) + \sum_{j=1}^M \alpha_j P_j(x)
    \]
    then $m(x)=y^*$.

    Now in order to get \ref{Mdiseq1} we find a lower bound for the left-hand side using the following two observations:
    \begin{itemize}
        \item for $x \in X_c^*$, by definition of $\Delta_c$
            \[
                P_0(y^*_c \mid x) - P_0(y \mid x) \geq - \Delta_c
            \]
            hence:
            \[
                (1- \bar{\alpha}) (P_0(y_c^* \mid x) - P_0(y \mid x)) P_0(x) \geq - (1- \bar{\alpha}) \Delta_c P_0(x);
            \]
        \item for every $y \neq y^*_c$:
        \[
            \Sctr := \sum_{j:y_j^*\neq y^*_c} \alpha_j P_j(x) \geq \sum_{j:y_j^*=y} \alpha_j P_j(x) = S_y.
        \]
    \end{itemize}
    With these two observations, we can lower-bound the left-hand term of \ref{Mdiseq1}:
    \[
        N(y^*_c \mid x) -  N(y \mid x) \geq -(1-\bar{\alpha})\Delta_c\, P_0(x) + \Sal- \Sctr
    \]
    which is independent from $y$. We also use $\Sctr$ to rewrite the right hand term as:
    \[
        2\varepsilon\,P(x) = 2\varepsilon\big((1- \bar{\alpha})P_0(x) +\Sal +\Sctr\big).
    \]
    So we now have the following sufficient condition:
    \[
        -(1-\bar{\alpha})\Delta_c P_0(x) + \Sal- \Sctr \geq 2\varepsilon\big((1- \bar{\alpha})P_0(x) +\Sal +\Sctr\big).
    \]
    We can now rearrange to get:
    \[
        (1-2\varepsilon)\,\Sal \;-\; (1+2\varepsilon)\,\Sctr \ \geq\ (1- \bar{\alpha})\big( \Delta_c+2\varepsilon\big)\,P_0(x),
    \]
    and since $\Sal \geq \alpha_c P_c(x)$, a stricter sufficient condition is
    \[
        \alpha_c P_c^X(x) \ge (1-\bar{\alpha}) \underbrace{\frac{\Delta_c + 2\varepsilon}{1 - 2\varepsilon}}_{:=A_c(\varepsilon,\Delta_c)} P_0(x) \;+\; {\underbrace{\frac{1+2\varepsilon}{1-2\varepsilon}}_{=:C(\varepsilon)}}\,\Sctr(x) \ =: R(x).
    \]
    This sufficient condition can be expressed in terms of characteristic functions as:
    \[
    \label{cf_inequality}
    \tag{A2}
        1_{\lbrace m(x)=y^* \rbrace} \geq 1_{\lbrace\alpha_c P_c^X(x) \geq R(x)\rbrace }.
    \]

    \textit{Second step:} First we apply lemma \ref{cfl} to the right hand side of \ref{cf_inequality} we get:
    \[
         1_{\lbrace m(x) \neq y^* \rbrace} \leq 1_{\lbrace\alpha_c P_c^X(x) < R(x) \rbrace } \leq \frac{R(x)}{\alpha_c P_c^X(x)}.
    \]
    now we can take the expectation over $x \sim P_c(x)$:
    \[
        1-S_c = \mathbb{E}_{P_c(x)} \big[1_{\lbrace m(x) \neq y^* \rbrace}\big] \leq \frac{1}{\alpha_c} \mathbb{E}_{P_c(x)} \left\lbrack(1-\bar{\alpha})A_c\frac{P_0(x)}{P_c(x)} + C(\varepsilon)\sum _{j:y^*_j \neq y^*_c} \alpha_j\frac{{P_j(x)}}{P_c(x)}\right\rbrack.
    \]
    Now we use the following facts:
    \begin{itemize}
        \item Rarity identity (FL): $\mathbb{E}_{P_c^X}\left[\frac{P_0(x)}{P_c^X(x)}\right] = P_0(X_c^*) \le \xi_c.$
        \item Overlap control: $\mathbb{E}_{P_c^X}\left[\frac{P_j^X(x)}{P_c^X(x)}\right] =\sum_{x \in X^*_c}P_j^X(x) \le \xi \text{ for each } j \neq c.$
    \end{itemize}
    And since
    \[
        \sum _{j:y^*_j \neq y^*_c} \alpha_j = \bar{\alpha}- \alpha_c - \beta_c
    \]
    we get:
    \[
      S_c\left(\alpha_c\right) \geq 1\;-\;\xi_c \cdot \frac{\Delta_c + 2\varepsilon}{1-2\varepsilon} \cdot \frac{1-\bar{\alpha}}{\alpha_c}\;-\;\xi \cdot \frac{1+2\varepsilon}{1-2 \varepsilon} \cdot \frac{\bar{\alpha}-\alpha_c-\beta_c}{\alpha_c}. \qedhere
    \]
    
\end{proof}

\FBound*
\begin{proof}
We follow similar steps as the proof of \ref{FLThm}. 

\textit{First step:}
Fix a collective $c$ and $x\in X$. For every $y\in Y$ define
\[
N(y\mid x)\ :=\ (1-\bar{\alpha})\,P_0(y\mid x)\,P_0(x)\;+\;\sum_{j=1}^M \alpha_j\,P_j(x,y).
\]
Under the \emph{feature-only} strategy, each $j$ modifies only the slice $y_j^*$ and leaves all other pairs $(x,y\ne y_j^*)$ unchanged. Writing
\[
Q_j^X \ :=\ (g_j)_{\#}P_0(\,\cdot \mid y=y_j^*\,),
\]
we have
\[
P_j(x,y)\;=\;1_{\{y=y_j^*\}}\,P_0(y_j^*)\,Q_j^X(x)\;+\;1_{\{y\neq y_j^*\}}\,P_0(x,y).
\]
Hence, for any $y\neq y_c^*$,
\begin{align}
N(y_c^*\mid x)-N(y\mid x)
&\ge (1-\bar{\alpha})\big(P_0(y_c^*\mid x)-P_0(y\mid x)\big)P_0(x) \nonumber \\
&\hspace{2em}+\underbrace{\sum_{j:\,y_j^*=y_c^*}\alpha_j\,P_0(y_c^*)\,Q_j^X(x)}_{S_{\mathrm{al}}(x)}
-\underbrace{\sum_{j:\,y_j^*=y}\alpha_j\,P_0(y)\,Q_j^X(x)}_{S_{y}(x)}.\label{eq:bound_diff_n}
\end{align}
Moreover,
\begin{gather}
P(x)=(1-\bar{\alpha})P_0(x)+\sum_{j=1}^M \alpha_j P_j^X(x),\\
P_j^X(x)= \sum_{y \neq y_j^*} P_0(x, y)+P_0\left(y_j^*\right) Q_j^X(x)=P_0(x)-P_0(x,y_j^*)+P_0(y_j^*)\,Q_j^X(x). \label{eq:identity_Pj}
\end{gather}
By Lemma~\ref{eml}, a sufficient condition for $m(x)=y_c^*$ is
\[
\tag{$A3$}\label{Mdiseq2}
N(y_c^*\mid x)-N(y\mid x)\ \ge\ 2\varepsilon\,P(x)\qquad\forall\,y\neq y_c^*.
\]
We bound the two sides of \eqref{Mdiseq2} pointwise, using:
\begin{itemize}
\item Uniform positivity: For the maximizer $y\neq y_c^*$ of $P_0(y\mid x)$, $P_0(y_c^*\mid x)\ge p_c$ implies
\(
P_0(y_c^*\mid x)-P_0(y\mid x)\ge 2p_c-1.
\)

\item Contradictors: $S_{y}(x)\le \Sctr(x)$ for all $y\neq y_c^*$, where
\[
\Sctr(x)\ :=\ \sum_{j:\,y_j^*\neq y_c^*}\alpha_j\,P_j^X(x).
\]

\item Aligned mass: Keep only the self-aligned contribution:
\(
S_{\mathrm{al}}(x)\ \ge\ \alpha_c\,P_0(y_c^*)\,Q_c^X(x).
\)

\item Mixture upper bound:
Using \eqref{eq:identity_Pj},
we can write
\begin{align*}
P(x)=\ P_0(x)\ -\ \sum_{j}\alpha_j\,P_0(x,y_j^*)\ +\ \sum_{j}\alpha_j\,P_0(y_j^*)\,Q_j^X(x)\leq\ P_0(x)\ +\ \sum_{j}\alpha_j\,P_0(y_j^*)\,Q_j^X(x).
\end{align*}
Remembering that $P_0(y_j^*)\,Q_j^X(x)\ \le\ P_j^X(x)$ and splitting the last sum into aligned and contradicting indices, we obtain
\[
P(x)\ \le
\ \ P_0(x)\ +\ S_{\mathrm{al}}(x)\ +\ \Sctr(x).
\]
Hence
\(
2\varepsilon\,P(x)\ \le\ 2\varepsilon\big(P_0(x)+S_{\mathrm{al}}(x)+\Sctr(x)\big).
\)
\end{itemize}

Tightening the sufficient condition \eqref{Mdiseq2} by setting the bound in \eqref{eq:bound_diff_n} $\geq 2\epsilon P(x)$, combining this with the four bullets above,  and rearranging, we obtain
\begin{equation}\tag{$A4$}\label{eq:A4}
(1-2\varepsilon)\,\alpha_c\,P_0(y_c^*)\,Q_c^X(x)\;-\;(1+2\varepsilon)\,\Sctr(x)
\ \ge\ \Big((1-\bar{\alpha})(1+2\varepsilon-2p_c)+2\varepsilon\,\bar{\alpha}\Big)\,P_0(x),
\qquad x\in X_c^*.
\end{equation}

For $x\in X_c^*$ define the preimage ratios
\[
r_c(x)\ :=\ \frac{P_0(x)}{P_0\!\big(g_c^{-1}(x)\big)}\,,\qquad
t_c(x)\ :=\ \frac{\Sctr(x)}{P_0\!\big(g_c^{-1}(x)\big)}\,.
\]
Using $P_0(y_c^*)\,Q_c^X(x)=P_0\!\big(g_c^{-1}(x),y_c^*\big)$ and uniform positivity,
\[
\frac{P_0\!\big(g_c^{-1}(x),y_c^*\big)}{P_0\!\big(g_c^{-1}(x)\big)}
=\frac{\sum_{x^{\prime} \in g_c^{-1}(x)} P_0\!\left(y_c^* \mid x^{\prime}\right) P_0\!\left(x^{\prime}\right)}{P_0\!\left(g_c^{-1}(x)\right)}
\ \ge\ \frac{1}{P_0\!\left(g_c^{-1}(x)\right)} \sum_{x^{\prime} \in g_c^{-1}(x)} p_c\, P_0\!\left(x^{\prime}\right)
=\ p_c.
\]
Dividing \eqref{eq:A4} by $P_0(g_c^{-1}(x))$ and rearranging then yields the \emph{preimage} sufficient condition
\begin{equation}\tag{$A5$}\label{eq:A5}
\alpha_c\,p_c\ \ge\
(1-\bar{\alpha})\,\underbrace{\frac{1+2\varepsilon-2p_c}{1-2\varepsilon}}_{=:B_{y_c^*}(\varepsilon,p_c)}\,r_c(x)\;+\;
\frac{2\varepsilon\,\bar{\alpha}}{1-2\varepsilon}\,r_c(x)\;+\;
\underbrace{\frac{1+2\varepsilon}{1-2\varepsilon}}_{=:C(\varepsilon)}\,t_c(x),\qquad x\in X_c^*.
\end{equation}

\textit{Second step:} Define the post–action feature law as the pushforward
$\lambda_c \ :=\ (g_c)_{\#} P_0^X$ of $P_0^X$ through $g_c$.
With this notation, the per–collective success can be written as the expectation under $\lambda_c$ (law of the unconscious statistician):
\[
S_c(\alpha_c)
\ =\ \Pr_{x\sim P_0^X}\!\big[m(g_c(x))=y_c^*\big]
\ =\ \sum_{x\in X} 1\{m(x)=y_c^*\}\,\lambda_c(x).
\]

By Lemma~\ref{cfl} and the preimage margin condition \eqref{eq:A5} proved in the first step, we have for all $x\in X$,
\[
1\{m(x)\neq y_c^*\}\ \le\ 1\{x\in X_c^*\}\,
\frac{(1-\bar{\alpha})\,B_{y_c^*}(\varepsilon,p_c)\,r_c(x)\;+\;\displaystyle\frac{2\varepsilon\,\bar{\alpha}}{1-2\varepsilon}\,r_c(x)\;+\;C(\varepsilon)\,t_c(x)}{\alpha_c\,p_c}\,.
\]
Summing over $x\in X$ and using that $\mathrm{supp}(\lambda_c)\subseteq X_c^*$, we obtain 
\[
\begin{aligned}
1-S_c(\alpha_c)
&= \sum_{x\in X}1\{m(x)\neq y_c^*\}\,\lambda_c(x)\\[2pt]
&\le \frac{(1-\bar{\alpha})\,B_{y_c^*}(\varepsilon,p_c)}{\alpha_c\,p_c}
\sum_{x\in X_c^*}\! r_c(x)\,\lambda_c(x)
\ +\ \frac{2\varepsilon\,\bar{\alpha}}{(1-2\varepsilon)\,\alpha_c\,p_c}
\sum_{x\in X_c^*}\! r_c(x)\,\lambda_c(x)\\[2pt]
&\hspace{2.8cm}
\ +\ \frac{C(\varepsilon)}{\alpha_c\,p_c}\,
\sum_{x\in X_c^*}\! t_c(x)\,\lambda_c(x).
\end{aligned}
\]
Since $r_c(x)=\dfrac{P_0(x)}{\lambda_c(x)}$ and $t_c(x)=\dfrac{\Sctr(x)}{\lambda_c(x)}$, it holds
\[
\sum_{x\in X_c^*}\! r_c(x)\,\lambda_c(x)\ =\ \sum_{x\in X_c^*}\! P_0(x)\ =:\ P_0(X_c^*)\ \le\ \xi_c,
\]
and
\[
\sum_{x\in X_c^*}\! t_c(x)\,\lambda_c(x)\ =\ \sum_{x\in X_c^*}\! \Sctr(x)
\ =\ \sum_{j:\,y_j^*\neq y_c^*}\alpha_j \sum_{x\in X_c^*}\!P_j^X(x)
\ \le\ \xi\,\big(\bar{\alpha}-\alpha_c-\beta_c\big),
\]
where the last inequality uses global $\xi$–uniqueness assumption.

Combining the displays yields
\[
S_c(\alpha_c)\ \ge\ 1\;-\;\xi_c \cdot
\frac{(1-\bar{\alpha})\,B_{y_c^*}(\varepsilon,p_c)\;+\;\displaystyle\frac{2\varepsilon\,\bar{\alpha}}{1-2\varepsilon}}{\alpha_c\,p_c}
\;-\;\xi \cdot C(\varepsilon)\cdot
\frac{\bar{\alpha}-\alpha_c-\beta_c}{\alpha_c\,p_c}.
\]
Recalling $B_{y_c^*}(\varepsilon,p_c)=\dfrac{1+2\varepsilon-2p_c}{1-2\varepsilon}$ and $C(\varepsilon)=\dfrac{1+2\varepsilon}{1-2\varepsilon}$, we obtain
\[
\;
S_c(\alpha_c)\ \ge\ 1\;-\;
\xi_c\,\frac{(1-\bar{\alpha})(1+2\varepsilon-2p_c)\;+\;2\varepsilon\,\bar{\alpha}}{(1-2\varepsilon)\,\alpha_c\,p_c}
\;-\;
\xi\,\frac{1+2\varepsilon}{1-2\varepsilon}\,\frac{\bar{\alpha}-\alpha_c-\beta_c}{\alpha_c\,p_c}\;
\]

\end{proof}
\end{document}